\newcommand\mynotes[1]{#1}
\newtheorem{prop}{Proposition}[section]
\icmltitlerunning{Greedy Layerwise Learning Can Scale to ImageNet}
\begin{document}
\twocolumn[
\icmltitle{Greedy Layerwise Learning Can Scale to ImageNet}
           
\begin{icmlauthorlist}
  \icmlauthor{Eugene Belilovsky}{mila}
  \icmlauthor{Michael Eickenberg}{ucb}
  \icmlauthor{Edouard Oyallon}{cd}
\end{icmlauthorlist}
\icmlaffiliation{mila}{Mila, University of Montreal}
\icmlaffiliation{ucb}{University of California, Berkeley}
\icmlaffiliation{cd}{CentraleSupelec, University of Paris-Saclay}

\icmlcorrespondingauthor{Edouard Oyallon}{edouard.oyallon@centralesupelec.fr}
\icmlcorrespondingauthor{Michael Eickenberg}{michael.eickenberg@berkeley.edu}
\icmlcorrespondingauthor{Eugene Belilovsky}{eugene.belilovsky@umontreal.ca}

\icmlkeywords{Machine Learning, ICML}

\vskip 0.3in
]
\printAffiliationsAndNotice{} 

\begin{abstract}

Shallow supervised 1-hidden layer neural networks have a number of favorable properties that make them easier to interpret, analyze, and optimize than their deep counterparts, but lack their representational power. 
Here we use 1-hidden layer learning problems to sequentially build deep networks layer by layer, which can inherit 
properties from shallow networks. 
%
%
Contrary to previous approaches using shallow networks, we focus on problems where deep learning is reported as critical for success. 
We thus study CNNs on image 
classification
tasks using the large-scale ImageNet dataset and the CIFAR-10 dataset.
Using a simple set of ideas for architecture and training we find that solving sequential 1-hidden-layer auxiliary problems lead to a CNN that exceeds AlexNet performance on ImageNet. 
Extending this training methodology to construct individual layers by solving 2-and-3-hidden layer auxiliary problems, we obtain an 11-layer network that exceeds several members of the VGG model family on ImageNet, and can train a VGG-11 model to the same accuracy as end-to-end learning. To our knowledge, this is the first competitive alternative to end-to-end training of CNNs that can scale to ImageNet. We illustrate several interesting properties of these models theoretically and 
conduct a range of experiments to study the properties this training induces on the intermediate representations.
\end{abstract}
\vspace*{-25pt}    
\section{Introduction}
\label{sec:intro}
Deep Convolutional Neural Networks (CNNs) trained on large-scale supervised data via the back-propagation algorithm have become the dominant approach in most computer vision tasks \citep{krizhevsky2012imagenet}. 
This has motivated successful applications of deep learning in other fields such as speech recognition \citep{chan2016listen}, natural language processing \citep{vaswani2017attention}, and reinforcement learning \citep{silver2017mastering}. 
Training procedures and architecture choices for deep CNNs have become more and more entrenched, but which of the standard components of modern pipelines are essential to the success of deep CNNs is not clear. Here we 
ask:
do CNN layers need to be learned jointly to obtain high performance? We will show that even for the 
challenging
ImageNet dataset the answer is \textit{no}. 

Supervised end-to-end learning is the standard approach to neural network optimization.
However it has potential issues that can be valuable to consider.
First, the use of a global objective means that the final functional behavior of individual intermediate layers of a deep network is only indirectly specified: 
it is unclear how the layers work together to achieve high-accuracy predictions.
Several authors have suggested and shown empirically that CNNs learn to implement mechanisms 
that progressively induce invariance
to complex, but irrelevant variability \citep{mallat2016understanding,yosinski2015understanding} 
while increasing
linear separability \citep{zeiler2014visualizing,oyallon2017building,jacobsen2018revnet} of the data.
%
Progressive linear separability has been shown empirically but it is unclear whether this is merely the consequence of other strategies implemented by CNNs, 
or if it is a sufficient condition for the 
high performance of these networks.
Secondly, understanding the link between shallow Neural Networks (NNs) and deep NNs is difficult: while generalization, approximation, or optimization  results~\citep{barron1994approximation,bach2014breaking,venturi2018neural,neyshabur2018towards,pinkus1999approximation} for 1-hidden layer NNs are available, the same studies conclude that multiple-hidden-layer NNs are much more difficult to tackle theoretically.
Finally, 
end-to-end back-propagation can be inefficient \citep{jaderberg2016decoupled, salimans2017evolution} in terms of computation and memory resources and is considered not biologically plausible. 

Sequential learning of CNN layers by solving shallow supervised learning problems is an alternative to end-to-end back-propagation.
This classic \cite{Ivanhenko} learning principle can directly specify the objective of every layer. 
It can encourage
the refinement of specific properties of the representation~\citep{greff2016highway}, such as progressive linear separability.
The 
development of
theoretical tools for deep greedy methods 
can naturally
draw from the
theoretical understanding of shallow sub-problems.
Indeed, \cite{AroraBMM17,bengio2006convex,bach2014breaking,janzamin2015beating} show global optimal approximations, while other works have shown that networks based on sequential 1-hidden layer training can have a variety of guarantees under certain assumptions \citep{huang2017learning,malach2018provably,arora2014provable}: 
greedy layerwise methods could permit to cascade those results
to bigger architectures.
Finally, a greedy approach will rely much less on 
having access to
a full gradient.
This can have a number of benefits.
From an algorithmic perspective, they do not require storing most of the intermediate activations nor to compute most intermediate gradients.
This can be beneficial in memory-constrained settings. Unfortunately, prior work has not convincingly demonstrated that layer-wise training strategies can tackle the sort of large-scale problems that have brought deep learning into the spotlight. 

Recently  multiple works have demonstrated interest in determining whether alternative training methods \cite{Xiao2019,bartunov2018assessing} can scale to large data-sets that have only been solved by 
deep learning. 
As is the case for many algorithms (not just training strategies) many of these alternative training strategies have been shown to work on smaller datasets (e.g. MNIST and CIFAR) but fail completely on large-scale datasets (e.g. ImageNet). 
Also, 
these works
largely focus on avoiding the weight transport problem in backpropagation \cite{bartunov2018assessing} while simple  greedy layer-wise learning 
reduces the extent of this
problem 
and should be considered as a potential baseline.

In this context, our contributions are as follows. 
\textbf{(a)} First, we design a simple and scalable supervised approach to learn layer-wise CNNs in Sec. \ref{layerw}. 
\textbf{(b)} Then, Sec. \ref{sec:alex} demonstrates empirically that by sequentially solving 1-hidden layer problems, we can match the performance of the AlexNet on ImageNet. 
We motivate in Sec. \ref{sec:aux_prop} how this model can be connected to a body of theoretical work that tackles 1-hidden layer networks and their sequentially trained counterparts. 
\textbf{(c)} We show that layerwise trained layers exhibit a progressive linear separability property in Sec. \ref{sec:prog_exp}. 
\textbf{(d)} In particular, we use this to help motivate learning layer-wise CNN layers via shallow $k$-hidden layer auxiliary problems, with $k>1$.
Using this approach  our sequentially trained $3$-hidden layer models can reach the performance level of VGG models (Sec. \ref{sec:scaling}) and end-to-end learning.
\textbf{(e)} Finally, we suggest an approach to easily reduce the model size \textit{during training} of these networks. 

\section{Related Work}\label{sec:rel}
%
Several authors have previously 
studied
layerwise learning.
In this section we review 
related works and re-emphasize the distinctions from our work. 

Greedy unsupervised learning has been a popular topic of research in the past.
Greedy unsupervised learning of deep generative models \citep{bengio2007greedy,hinton2006fast} was shown to be effective as an initialization for deep supervised architectures.
\citet{bengio2007greedy} also considered supervised greedy layerwise learning as  \textit{initialization} of networks for subsequent end-to-end supervised learning, but this was not shown to be effective with the existing techniques at the time. 
Later work on large-scale supervised deep learning showed that modern training techniques permit avoiding layerwise initialization entirely \citep{krizhevsky2012imagenet}. 
We emphasize that the supervised layerwise learning we consider is distinct from unsupervised layerwise learning.
Moreover, here layerwise training is not studied as a \textit{pretraining} strategy, but a \textit{training} one.

Layerwise learning in the context of constructing supervised NNs has been attempted in several works. It was considered in multiple earlier works \citet{Ivanhenko,fahlman1990cascade,lengelle1996training} on very simple problems
and in a climate where deep learning was not a dominant supervised learning approach.
These works were aimed primarily at structure learning, building up architectures that allow the model to grow appropriately based on the data. Others 
works were
motivated by the avoidance of difficulties 
with vanishing gradients.
Similarly, \citet{cortes2016adanet} recently proposed a progressive learning method that builds a network such that the architecture can adapt to the problem, 
with theoretical contributions to structure learning,
but 
not on
problems where deep networks are unmatched 
in performance.
\citet{malach2018provably} also train a supervised network in a layerwise fashion, showing that their method provably generalizes for a restricted class of image models. However, the results of these model are not shown to be competitive with handcrafted approaches~\citep{oyallon2015deep}. Similarly \citep{kulkarni2017layer, MarquezNN} 
revisit layerwise training, but 
in a limited experimental setting.

 ~\citet{huang2017learning} 
combined boosting theory with  a residual architecture \citep{he2016deep} to sequentially train layers. However, results are presented for limited datasets and indicate that the end-to-end approach is often needed 
ultimately to obtain competitive results. This proposed strategy does not clearly outperform simple non-deep-learning baselines.
By contrast, we focus on settings where deep CNN based-approaches do not currently have  competitors and rely on a simpler objective function, which is found to scale well and be competitive with end-to-end approaches. 

Another related thread 
is
methods which add layers to existing networks and then use end-to-end learning.
These approaches usually have different goals from ours, such as stabilizing end-to-end learned models. 
\citet{brock2017freezeout} builds a network in stages, where certain layers are progressively frozen, 
permitting
faster training.
\citet{mosca2017deep, wang2017deep} propose methods that progressively stack layers
, performing end-to-end learning on the resulting network at each step.
A similar strategy was applied for training GANs in \citet{karras2017progressive}.
By the nature of our goals in this work, we never
perform
fine-tuning of the whole network. Several methods also consider auxiliary supervised objectives \cite{lee2015deeply} to stabilize end-to-end learning, but 
this is 
different from the case where these objectives are not solved jointly.

\vspace*{-10pt}
\section{Supervised Layerwise Training of CNNs}\label{layerw}
\vspace*{-5pt}
In this section we formalize the architecture, training algorithm, and the necessary notations and terminology. We focus on CNNs, with ReLU non-linearity denoted by $\rho$. Sec. \ref{sec:arch} 
describes
a layerwise training scheme using a succession of auxiliary learning tasks. 
We add one layer at a time: 
the first layer of a $k$-hidden layer CNN problem. Finally, we 
discuss the distinctions in varying $k$. 
\vspace*{-10pt}
\subsection{Architecture Formulation}\label{sec:arch}
\begin{figure*}
  \begin{minipage}{\textwidth}
  \begin{minipage}[b]{0.63\textwidth}
       \begin{center}
     \includegraphics[width=\linewidth]{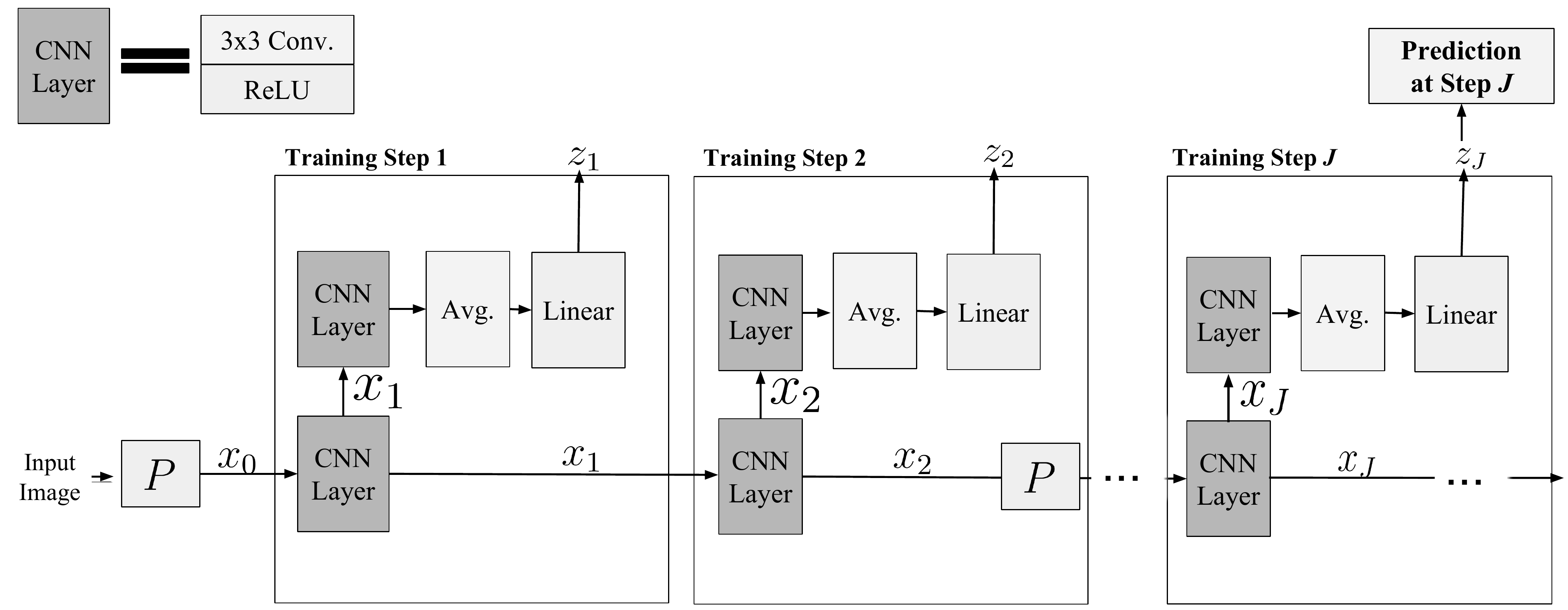}
       \end{center}
    \captionof{figure}{High level diagram of the layerwise CNN learning experimental framework using a $k=2$-hidden layer. $P$, the down-sampling (see Figure 2 \cite{jacobsen2018revnet}) , is applied at the input image as well as at $j=2$.\vspace{-15pt}}
    \label{fig:greedy}
  \end{minipage}
  \hfill
  \begin{minipage}[b]{0.35\textwidth}
    \begin{algorithm}[H]
        \caption{Layer Wise CNN} 
        \label{algo}
        \begin{algorithmic}
        \small
       \STATE {\bfseries Input:} Training samples $\{x_0^n,y^n\}_{n\leq N}$
        \FOR{$j \in 0..J-1$}
            \STATE \texttt{Compute $\{x_j^n\}_{n\leq N}$ (via Eq.\eqref{eq:nonsymalgo})}\\
            \STATE $(\theta_j^*,\gamma_j^*)= \arg\min_{\theta_j,\gamma_j}\hat{\mathcal{R}}(z_{j+1};\theta_{j},\gamma_{j})$\\
        \ENDFOR
        \end{algorithmic}
        
    \end{algorithm}
    \vspace{0.55cm}
    \end{minipage}
  \end{minipage}
\end{figure*}
\vspace*{-5pt}
Our architecture has $J$ blocks (see Fig. \ref{fig:greedy}), which are trained in succession.
From an input signal $x$, an initial representation $x_0 \triangleq x$ is propagated through $j$ convolutions, giving $x_j$.
Each $x_j$ feeds into an \textit{auxiliary classifier} to obtain prediction $z_j$, which computes an intermediate classification output. 
At depth $j$, denote by $W_{\theta_j}$ a convolutional operator with parameters $\theta_j$ , $C_{\gamma_j}$ an auxiliary classifier with all its parameters denoted $\gamma_j$, and $P_j$ a down-sampling operator. The parameters correspond to $3\times  3$ kernels with bias terms. Formally, from layer $x_j$ we iterate as follows:\begin{equation}
\left\{\begin{aligned}
	x_{j+1} &= \rho W_{\theta_j} P_jx_{j} \\
	z_{j+1}&= C_{\gamma_j} x_{j+1} \in \mathbb{R}^c
\end{aligned}
\right.\label{eq:nonsymalgo}\end{equation}where $c$ is the number of classes. 
For 
the pooling operator $P$ we choose the \textit{invertible downsampling} operation described in \citet{dinh2016density}, which consists in reorganizing the initial spatial channels into the 4 spatially decimated copies obtainable by $2\times 2$ spatial sub-sampling, reducing the resolution by a factor $2$.
We decided against strided pooling, average pooling, and the non-linear max-pooling, because these strongly encourage loss of information.
As is standard practice, $P$ is applied at certain layers ($P_j=P$), but not others ($P_j=Id$).
The CNN classifier $C_{\gamma_j}$ is given by:\begin{equation}C_{\gamma_j}x_j =\left\{
\begin{aligned}
	 LAx_j \quad \textrm{ for }  k = 1 \\
	LA\rho \tilde W_{k-2} ... \rho \tilde W_0 x_j  \quad \textrm{ for } k>1\\\end{aligned}\right.\end{equation}
 where $\tilde W_0, ..., \tilde W_{k-2}$ are convolutional layers with constant width, $A$ is a spatial averaging operator, and $L$ a linear operator whose output dimension is $c$. 
 The
 averaging operation is important for maintaining scalability at early layers.  
 For
 $k=1$, $C_{\gamma_j}$ is 
 a linear model. 
 In
 this case our architecture 
 is
 trained by a sequence of 1-hidden layer CNN.
\vspace*{-5pt}
\subsection{Training by Auxiliary Problems}
%
%
\vspace*{-5pt}
Our training procedure is layerwise: at depth $j$, while keeping all other parameters fixed, $\theta_j$ is obtained via an \textit{auxiliary problem}: optimizing $\{\theta_j,\gamma_j\}$ to obtain the best training accuracy for 
\textit{auxiliary classifier} $C_{\gamma_j}$.
We 
formalize this idea for a training set $\{x^n,y^n\}_{n\leq N}$:
For a function  $z(\cdot;\theta,\gamma)$  parametrized by $\{\theta,\gamma\}$ and a loss $l$  (e.g. cross entropy), we consider the classical minimization of the empirical risk:
 $\hat{\mathcal{R}}(z ; \theta,\gamma)\triangleq \frac 1 N \sum_n l(z(x^n; \theta,\gamma),y^n)$.

 At depth $j$, assume we have constructed the parameters $\{\hat\theta_0,...,\hat\theta_{j-1}\}$. 
Our algorithm
can produce
samples $\{x_j^n\}$.
Taking $z_{j+1}=z(x_j^n;\theta_j,\gamma_j)$,
we 
employ an optimization procedure that aims to minimize the risk $\hat{\mathcal{R}}(z_{j+1};\theta_j,\gamma_j)$. This procedure (Alg. 1) consists in training (e.g. using SGD) the shallow CNN classifier $C_j$ on top of $x_j$,  to obtain the new parameter $\hat\theta_{j+1}$. 
Under mild conditions, it improves the training error at each layer as shown below:
\begin{prop}[Progressive improvement] \label{prop:improv} Assume that $P_j=Id$. Then there exists $\tilde\theta$ such that:
\vspace*{-5pt}
\[\hat{\mathcal{R}}(z_{j+1};\hat\theta_{j},\hat\gamma_{j})\leq\hat{\mathcal{R}}(z_{j+1};\tilde\theta,\hat\gamma_{j-1})=\hat{\mathcal{R}}(z_{j};\hat\theta_{j-1},\hat\gamma_{j-1}).\]
\end{prop}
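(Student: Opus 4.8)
\emph{Proof plan.} The statement splits into an inequality and an equality that rely on completely different ideas, so I would handle them separately. The inequality is essentially free: by construction $(\hat\theta_j,\hat\gamma_j)$ is what the auxiliary optimization of Alg.~\ref{algo} returns for the risk $(\theta_j,\gamma_j)\mapsto\hat{\mathcal{R}}(z_{j+1};\theta_j,\gamma_j)$, so — granting that this optimization reaches a value no larger than at any particular feasible point — it suffices to plug in the pair $(\tilde\theta,\hat\gamma_{j-1})$ for a well-chosen $\tilde\theta$. The content of the proposition is therefore the equality: I want to exhibit $\tilde\theta$ for which the depth-$(j+1)$ auxiliary network, \emph{equipped with the old classifier parameters} $\hat\gamma_{j-1}$, computes on every training point exactly the function that the depth-$j$ auxiliary network computed.

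To build that $\tilde\theta$ I would unfold the definitions: $z_{j+1}(x^n;\theta,\gamma)=C_\gamma(\rho W_\theta P_j x_j^n)$, whereas from the previous stage $x_j^n=\rho W_{\hat\theta_{j-1}}P_{j-1}x_{j-1}^n$ and $z_j(x^n;\hat\theta_{j-1},\hat\gamma_{j-1})=C_{\hat\gamma_{j-1}}x_j^n$. Two facts then force agreement. First, $x_j^n$ is the output of a ReLU layer, hence nonnegative entrywise, so $\rho x_j^n=x_j^n$. Second, choose $W_{\tilde\theta}$ to be the identity convolution — the $3\times3$ kernel that is the Dirac mask at the central tap, identity across channels, with zero bias — so that $W_{\tilde\theta}x=x$. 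Invoking the hypothesis $P_j=\mathrm{Id}$,
\[
\rho W_{\tilde\theta}P_j x_j^n=\rho W_{\tilde\theta}x_j^n=\rho x_j^n=x_j^n,
\]
whence $z_{j+1}(x^n;\tilde\theta,\hat\gamma_{j-1})=C_{\hat\gamma_{j-1}}x_j^n=z_j(x^n;\hat\theta_{j-1},\hat\gamma_{j-1})$ for all $n$. Since $l$ only sees the prediction and the label, averaging over $n$ yields $\hat{\mathcal{R}}(z_{j+1};\tilde\theta,\hat\gamma_{j-1})=\hat{\mathcal{R}}(z_j;\hat\theta_{j-1},\hat\gamma_{j-1})$, and chaining with the inequality gives the claim.

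I would then flag where the hypotheses enter. The assumption $P_j=\mathrm{Id}$ is essential: with a downsampling $P$ the tensor $\rho W_{\tilde\theta}P x_j^n$ lives at a coarser resolution than $x_j^n$, a single $3\times3$ convolution cannot in general invert the space-to-channel reshuffling, and the depth-$j$ classifier can no longer be reused verbatim. Nonnegativity of $x_j^n$, by contrast, costs nothing — it is automatic from the ReLU — and is exactly what lets the identity convolution pass unharmed through the outer $\rho$. I am also tacitly using that the auxiliary-classifier parameter spaces at depths $j-1$ and $j$ coincide (so $\hat\gamma_{j-1}$ is a legal value of $\gamma_j$) and that the block width does not shrink from $x_j$ to $x_{j+1}$ (so the identity convolution is well defined); both hold for the architecture of Sec.~\ref{sec:arch}, and if the width grows one just zero-pads $W_{\tilde\theta}$ and $\hat\gamma_{j-1}$ on the new channels.

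The one place I expect the argument not to be purely mechanical — the ``mild conditions'' of the statement — is the exact meaning of $(\hat\theta_j,\hat\gamma_j)$ in the inequality. If Alg.~\ref{algo} is read as literally returning the $\arg\min$ of the auxiliary risk, there is nothing to do. If instead it returns an SGD iterate, one needs an extra hypothesis such as: the optimizer is warm-started at $(\tilde\theta,\hat\gamma_{j-1})$ and the best iterate encountered is kept, so that its final risk cannot exceed $\hat{\mathcal{R}}(z_{j+1};\tilde\theta,\hat\gamma_{j-1})$. Making this precise is the main (and essentially only) obstacle; everything else is bookkeeping.
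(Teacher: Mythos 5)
Your proof is correct and takes essentially the same route as the paper's: the paper's one-line argument is precisely to pick $W_{\tilde\theta}=\mathrm{Id}$ and use the idempotence $\rho(\rho(x))=\rho(x)$ (your entrywise-nonnegativity observation), with the inequality following because the optimizer cannot do worse than this feasible warm start — a caveat the paper also states in the surrounding text. Your additional remarks on channel widths and on $P_j=\mathrm{Id}$ being essential are sound elaborations of the same argument, not a different approach.
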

\vspace*{-5pt}
A technical requirement for the actual optimization procedure is to not produce a worse objective than the initialization, which can be achieved by  taking the best result along the optimization trajectory.

The cascade can inherit
from the individual properties of each auxiliary problem. For instance, as $\rho$ is 1-Lipschitz, if each $W_{\hat\theta_j}$ is 1-Lipschitz then so is $x_J$ w.r.t. $x$.   Another example is the nested objective defined by Alg. 1: the optimality of the solution will be largely governed by the optimality of the sub-problem solver. Specifically, if the auxiliary problem solution is close to optimal 
then
the solution of Alg. 1 will be close to optimal.

\begin{prop}
\label{prop:opt}Assume the parameters $\{\theta_{0}^*,...,\theta_{J-1}^*\}$ are obtained via a optimal layerwise optimization procedure. We assume that $ W_{\theta_j^*}$ is 1-lipschitz without loss of generality and that the biases are bounded uniformly by $B$. Given an input function $g(x)$, we consider functions of the type $z_{g}(x)=C_\gamma \rho W_\theta g(x)$. For $\epsilon>0$, we call $\theta_{\epsilon,g}$ the parameter provided by a procedure to minimize $\hat{\mathcal{R}}(z_{g};\theta;\gamma)$ which leads to a 1-lipschitz operator that satisfies:
1.~$ \underbrace{\Vert \rho W_{\theta_{\epsilon,g}}g(x) -\rho W_{\theta_{\epsilon,\tilde g}}\tilde{g}(x)\Vert\leq\Vert g(x)-\tilde{g}(x)\Vert}_\text{ (stability)},\forall g,\tilde g,\quad$ \\ 2.~~~~~~~$\underbrace{\Vert W_{\theta_j^*} x_j^*-W_{\theta_{\epsilon,x_j^*}} x_j^*\Vert\leq\epsilon(1+\Vert  x_j^* \Vert)}_\text{ ($\epsilon$-approximation)},$

with, $\hat x_{j+1}=\rho W_{\theta_{\epsilon,\hat x_j}}\hat x_j$ and $ x_{j+1}^*=\rho W_{\theta_j^*} x_j^*$ with $x_0^*=\hat x_0=x$, then, we prove by induction:\begin{equation}
\Vert  x_J^*-\hat x_{J}\Vert=\mathcal{O}(J^2\epsilon) \end{equation}\end{prop}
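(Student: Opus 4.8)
The plan is to run an induction on the depth $j$ that controls the error $\delta_j \triangleq \Vert x_j^* - \hat x_j\Vert$, starting from $\delta_0 = 0$ since $x_0^* = \hat x_0 = x$. The heart of the argument is a one-step recursion obtained by inserting the pivot term $\rho W_{\theta_{\epsilon,x_j^*}} x_j^*$, namely the approximate operator built for the \emph{clean} input $x_j^*$ applied to $x_j^*$, and splitting with the triangle inequality:
\[
\delta_{j+1} \leq \bigl\Vert \rho W_{\theta_j^*} x_j^* - \rho W_{\theta_{\epsilon,x_j^*}} x_j^*\bigr\Vert + \bigl\Vert \rho W_{\theta_{\epsilon,x_j^*}} x_j^* - \rho W_{\theta_{\epsilon,\hat x_j}} \hat x_j\bigr\Vert .
\]
Using that $\rho$ is $1$-Lipschitz, the first term is bounded by the $\epsilon$-approximation hypothesis (item 2) by $\epsilon(1+\Vert x_j^*\Vert)$. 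The second term is precisely an instance of the stability hypothesis (item 1) with $g$ the map producing $x_j^*$ and $\tilde g$ the map producing $\hat x_j$ (so that $\rho W_{\theta_{\epsilon,\hat x_j}}\hat x_j = \hat x_{j+1}$), hence bounded by $\Vert x_j^* - \hat x_j\Vert = \delta_j$. This gives $\delta_{j+1} \le \delta_j + \epsilon(1+\Vert x_j^*\Vert)$.

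Next I would bound the growth of the clean representation's norm. Since each $W_{\theta_j^*}$ is $1$-Lipschitz as a linear map, has bias bounded by $B$, and $\rho$ is $1$-Lipschitz with $\rho(0)=0$, one gets $\Vert x_{j+1}^*\Vert \le \Vert W_{\theta_j^*} x_j^*\Vert \le \Vert x_j^*\Vert + B$, hence $\Vert x_j^*\Vert \le \Vert x\Vert + jB$. Substituting into the recursion yields $\delta_{j+1} \le \delta_j + \epsilon(1+\Vert x\Vert + jB)$, and unrolling from $\delta_0 = 0$ gives
\[
\delta_J \le \epsilon \sum_{j=0}^{J-1}(1+\Vert x\Vert + jB) = \epsilon\Bigl(J(1+\Vert x\Vert) + B\,\tfrac{J(J-1)}{2}\Bigr) = \mathcal{O}(J^2\epsilon),
\]
with the hidden constant absorbing $\Vert x\Vert$ and $B$.

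The main care point, and the step most likely to be mishandled, is the choice of pivot in the triangle inequality: one must split through $\rho W_{\theta_{\epsilon,x_j^*}} x_j^*$ rather than through $\rho W_{\theta_{\epsilon,\hat x_j}} x_j^*$, so that the first difference is a pure $\epsilon$-approximation term evaluated at the true point $x_j^*$, where hypothesis 2 is stated, while the second difference is a pure stability term; reversing the pivot would force applying the $\epsilon$-approximation bound at $\hat x_j$, whose norm is not directly controlled. One also implicitly relies on the minimization procedure delivering a single parameter $\theta_{\epsilon,g}$ satisfying conditions 1 and 2 simultaneously (this is assumed in the statement), and on reading "biases bounded uniformly by $B$" in norm, i.e.\ $\Vert b_j\Vert \le B$; using instead a coordinate-wise bound only inflates the hidden constant by a dimensional factor and does not affect the $\mathcal{O}(J^2\epsilon)$ rate.
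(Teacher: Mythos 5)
Your proposal is correct and follows essentially the same argument as the paper's proof: the same pivot $\rho W_{\theta_{\epsilon,x_j^*}} x_j^*$ in the triangle inequality, the same use of non-expansivity to reduce the first term to the $\epsilon$-approximation hypothesis and the second to the stability hypothesis, the same norm bound $\Vert x_j^*\Vert \leq \Vert x\Vert + jB$, and the same unrolled sum giving $\epsilon\bigl(\tfrac{J(J-1)}{2}B + J\Vert x\Vert + J\bigr) = \mathcal{O}(J^2\epsilon)$. Your remark about the correct choice of pivot is exactly the step the paper's proof relies on.
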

\vspace{-5pt}
The proof can be found in the Appendix \ref{appendix:proof_prop_opt}. This demonstrates an example of how the training strategy can permit to extend results from shallow CNNs to deeper CNNs, in particular for $k=1$. Applying an existing optimization strategy could give us a bound on the solution of the overall objective of Alg. 1, as we will discuss below.

\vspace*{-5pt}
\subsection{Auxiliary Problems \& The Properties They Induce}\label{sec:aux_prop}
\vspace*{-5pt}
We now discuss the properties arising from the auxiliary problems. We start with $k=1$, for which the auxiliary classifier consists of only the linear $A$ and $L$ operators. Thus, the optimization aims to obtain the weights of a 1-hidden layer NN. For this case, as discussed in Sec.~\ref{sec:intro}, a variety of theoretical results exist (e.g. \citep{cybenko1989approximation, barron1994approximation}).  Moreover, ~\cite{AroraBMM17,ge2017learning,1cnntrain,bach2014breaking} proposed provable optimization strategies for this case. Thus the analysis and optimization of the 1-hidden layer problem is a case that is relatively well understood compared to deep counterparts. At the same time, as shown in Prop. \ref{prop:opt}, applying an existing optimization strategy could give us a bound on the solution of the overall objective of Alg. 1. 
To build intuition let us consider another example  where the analysis can be simplified for $k=1$ training. Recall  the  classic least square estimator  \cite{barron1994approximation} of a 1-hidden layer network: 
\begin{equation}(\hat L,\hat \theta)=\arg\inf_{(L,\theta)}\sum_n \Vert f(x^n)-L\rho W_{\theta}x^n \Vert^2\label{eq-mse}\end{equation}
where  $f$ is the function of interest. Following a suggestion from \cite{mallat2016understanding} (detailed in Appendix A) we can state there exists a set $\Omega_j$ and $ f_j$, $ \forall x\in \Omega_j, f(x)= f_j\circ\rho W_{\hat\theta_{j-1}}...\rho W_{\hat\theta_1}x$ where $\{\hat\theta_{j-1},...,\hat\theta_1\}$ are the parameters of greedily trained layers (with width  $\alpha_j$) and $\rho$ is sigmoidal. For simplicity, let us assume that $x^n\in\Omega_j,\forall n$. It implies that at step 
$j$ of a greedy training procedure with $k=1$, the corresponding sample loss is:
\[\Vert f(x^n)-z(x_j^n;\theta_j,\gamma_j)\Vert^2 = \Vert  f_j(x_j^n)-z(x_j^n;\theta_j,\gamma_j)\Vert^2\,.\]
 In particular, the right term is shaped as Eq. \eqref{eq-mse} and thus we can apply standard bounds available only for 1-hidden layer settings \cite{barron1994approximation,janzamin2015beating}%
.
In the case of jointly learning the layers we could not make this kind of formulation. For example if one now applied the algorithm of \cite{janzamin2015beating} and  their Theorem 5 it would give the following risk bound:
\begin{align*}
    \mathbb{E}_{X_j}[\Vert f_j(X_j)-z(X_j;\hat\theta_j,\hat\gamma_j)\Vert^2]\leq  \mathcal{O}&\bigg(C_ {f_j}^2(\frac{1}{\sqrt{\alpha_j}}+\delta_\rho)^2\bigg)\\
    &+\mathcal{O}(\epsilon^2)\,,
\end{align*}
where $X_j=\rho W_{\hat \theta_{j-1}}...\rho W_{\hat \theta_1}X_0$ is obtained from an initial bounded distribution $X_0$,  $C_{f_j}$ is the Barron Constant (as described in \cite{lee2017ability}) of $f_j$, $\delta_\rho$ a constant depending on $\rho$ and $\epsilon$ an estimation error. Furthermore, if we fit the $f_j$ layerwise using a method such as  \cite{janzamin2015beating}, 
we reach an approximate optimum for each layer given the state of the previous layer.
Under Prop~\ref{prop:opt}, we observe that small errors
at each layer, even taken cumulatively,
will not affect the final representation learned by the cascade. Observe that if $C_{f_j}$ decreases with $j$, the approximation bound on $f_j$ will correspondingly improve.

\mynotes{Another view of the $k=1$ training using a standard classification objective: the optimization of the 1-hidden layer network will encourage the hidden layer outputs to make its classification output maximally linearly separable with respect to its inputs.  Specializing Prop. \ref{prop:improv} for this case shows that the layerwise $k=1$ procedure will try to progressively improve the linear separability. Progressive linear separation has been empirically studied in end-to-end CNNs \citep{zeiler2014visualizing,oyallon2017building} as an indirect consequence, while the $k=1$ training permits us to study this basic principle more directly as the layer objective. 
Concurrent
work \cite{elad2019the} follows
the same argument to use a layerwise training procedure to evaluate mutual information more directly.}

Unique to our layerwise learning formulation, we consider the case where the auxiliary learning problem involves auxiliary hidden layers. We will interpret, and empirically verify, in Sec.  \ref{sec:prog_exp} that this builds layers that are progressively better inputs to shallow CNNs. We will also show a link to building, in a more progressive manner, linearly separable layers. Considering only shallow (with respect to total depth) auxiliary problems (e.g. $k=2,3$ in our work) we can maintain several advantages.  Indeed, optimization for shallower networks is generally easier, as we can for example diminish the vanishing gradient problem, reducing the need for identity loops or normalization techniques \citep{he2016deep}. Two and three hidden layer networks are also appealing for extending results from one hidden layer \citep{threelayer} as they are the next natural member in the family of NNs.


\vspace*{-5pt}
\section{Experiments and Discussion}\label{sec:exp}
We performed experiments on
the large-scale ImageNet-1k \citep{ILSVRC15}, a major catalyst for the recent popularity of deep learning, as well as the CIFAR-10 dataset. We study the classification performance of layerwise models with $k=1$, comparing them to standard benchmarks and other sequential learning methods. Then we  inspect the representations built through our auxiliary tasks and motivate the use of models learned with auxiliary hidden layers $k>1$, which we subsequently evaluate at scale. 

We highlight that many algorithms do not scale to large datasets \cite{bartunov2018assessing} like ImageNet. For example a SOTA hand-crafted image descriptor combined with a linear model achieves $82\%$ on CIFAR-10 while only $17.4\%$ on ImageNet \cite{PAMIScat}. 1-hidden layer CNNs from our experiments can obtain $61\%$ accuracy on CIFAR-10 while the same CNN results on ImageNet only gives $12.3\%$ accuracy. 
Alternative
learning methods for deep networks can sometimes show similar behavior, highlighting the importance of assessing their scalability. For example Feedback Alignment \cite{lillicrap2016random} which is able to achieve $63\%$ on CIFAR-10 compared to $68\%$ with backprop, on the 1000 class ImageNet obtains only $6.6\%$ accuracy compared to $50.9\%$ \cite{bartunov2018assessing,Xiao2019}.
Thus, based on these observations and the sparse and small-scale experimental 
efforts
of related works on greedy layerwise learning it is entirely unclear whether this family of approaches can work on large datasets like ImageNet and be used to construct useful models. Noting that ImageNet models not only represent benchmarks but have generic features \cite{yosinski2014transferable} and thus AlexNet- and VGG-like accuracies for CNN's on this dataset typically indicate representations that are generic enough to be useful for downstream applications.
\vspace*{-10pt}
\paragraph{Naming Conventions} We call $M$ the number of feature maps of the first convolution of the network and $\tilde M$ the number of feature maps of the first convolution of the auxiliary classifiers.
This fully defines the width of all the layers, since input width and output width are equal unless the layer has downsampling, in which case the output width is twice the input width.
Finally,  $A$ is chosen to average over the four spatial quadrants, yielding a $2\times 2$-shaped output.
Spatial averaging before the linear layer is common in ResNets \citep{he2016deep} to reduce size.
In our case this is critical to permit scalability to large image sizes at early layers of layer-wise training. For computational reasons on ImageNet, an invertible downsampling is also applied (reducing the signal to output $12\times112^2$). We also construct an ensemble model, which consists of  a weighted average of all auxiliary classifier outputs, i.e. $Z = \sum_{j=1}^J {2^{j}} z_j$. \mynotes{Our sandbox architectures to study layerwise learning end with a final auxiliary network that becomes part of the final model. 
In
some cases we may want to 
use a different final auxiliary network
after training the layer%
. We will use $\tilde{M}_f$ to denote the width of the final auxiliary CNN. For shorthand we will denote our simple CNN networks SimCNN.}

We briefly introduce the datasets and preprocessing.
CIFAR-10 consists of small RGB images with respectively $50k$ and $10k$ samples for training and testing.  
We use the standard data augmentation and optimize each layer with SGD using a momentum of 0.9 and a batch-size of 128.
The initial learning rate is  $0.1$ and we use the reduced schedule with decays of $0.2$ every $15$ epochs~\citep{zagoruyko2016wide}, for a total of 50 epochs in each layer. 
ImageNet  consists of $1.2M$ RGB images of  varying size for training.
Our data augmentation consists of random crops of size $224^2$.
At testing time, the image is rescaled to $256^2$ then cropped at size $224^2$. 
We used SGD with momentum 0.9 for a batch size of 256. 
The initial learning rate is $0.1$~\citep{he2016deep} and we use the reduced schedule with decays of $0.1$ every 20 epochs for 45 epochs. 
We use 4 GPUs to train our ImageNet models. 
\vspace{-5pt}
\subsection{AlexNet Accuracy with 1-Hidden Layer Auxiliary Problems}\label{sec:alex}
\vspace{-5pt}
We consider the atomic, layerwise CNN with $k=1$ which corresponds to solving a sequence of 1-hidden layer CNN problems. 
As discussed in Sec. \ref{sec:rel}, previous attempts at supervised layerwise training \citep{NIPS1989_207,arora2014provable,huang2017learning,malach2018provably}, which rely solely on sequential solving of shallow problems have yielded performance well below that of typical deep learning models even on the CIFAR-10 dataset. 
We show, surprisingly, that it is possible to go beyond the AlexNet performance barrier \citep{krizhevsky2012imagenet} without end-to-end backpropagation on ImageNet with  elementary auxiliary problems. To emphasize the stability of the training process we do not apply any batch-normalization to this model.
\vspace{-8pt}
\paragraph{CIFAR-10.} 
We trained a model with $J=5$ layers, down-sampling at layers $j=2,3$, and layer sizes starting at $M=256$. 
We obtain $88.3\%$ and note that this accuracy is close to the AlexNet model performance~\citep{krizhevsky2012imagenet} for CIFAR-10 (89.0\%). Comparisons in Table \ref{CIFAR} show that other 
sequentially trained  1-hidden layer networks have yielded performances that do not exceed those of the top hand-crafted methods or those using unsupervised learning. To the best of our knowledge they obtain $82.0\%$ accuracy~\citep{huang2017learning}. The end-to-end version of this model obtains $89.7\%$, while using $5\times$ more GPU memory than $k=1$ training. 

\paragraph{ImageNet.} 
\begin{figure}
    \centering
    \includegraphics[scale=0.36]{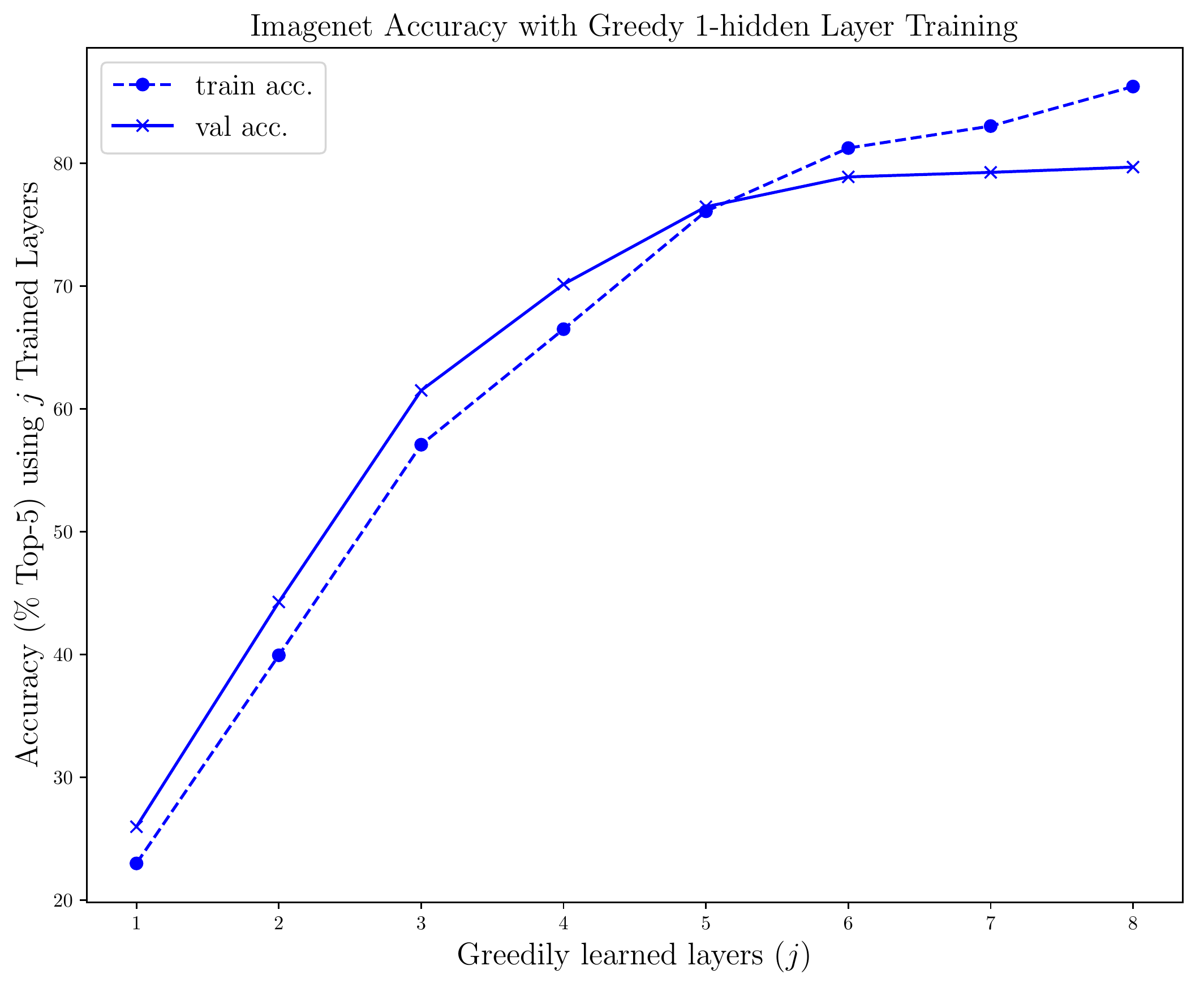}
    \caption{One-hidden layer trained ImageNet model shows rapid progressive improvement\vspace{-20pt}}
    \label{fig:prog}
\end{figure}
Our model is trained with $J=8$ layers and downsampling operations at layers $j=2,3,4,6$.
Layer sizes start at $M=256$.  
Our final trained model achieves \textbf{79.7\%} top-5 single crop accuracy on the validation set and \textbf{80.8\%} a weighted ensemble of the layer outputs. 
In addition to exceeding AlexNet, this model compares favorably to alternatives to end-to-end supervised CNNs including hand-crafted computer vision techniques \citep{sanchez2013image}. Full results are shown in Table \ref{tab:imagenet_full} where we also highlight 
the substantial
performance gaps 
to
multiple alternative training methods. Figure~\ref{fig:prog}  shows the per-layer classification  accuracy. We  observe a remarkably rapid progression (near linear in the first 5  layers) that takes the accuracy from $\sim 23\%$ to $\sim 80\%$ top-5. We leave a full theoretical explanation of this fast rate as an open question. Finally, in Appendix \ref{app:transfer} we demonstrate 
that
this model maintains the 
transfer learning properties of deep networks trained on ImageNet.  

We also note that our final training accuracy is relatively high for ImageNet (87\%), which indicates that appropriate regularization may lead to a further improvement in test accuracy. We now look at empirical properties induced in the layers and subsequently evaluate the distinct $k>1$.
\vspace*{-8pt}
\subsection{Empirical Separability Properties}\label{sec:prog_exp}
We study the intermediate representations generated by the layerwise learning procedure in terms of linear separability as well as separability by a more general set of classifiers.  Our aims are 
\textbf{(a)} to determine empirically whether $k=1$ indeed progressively builds more and more linearly separable data representations and 
\textbf{(b)} to determine how linear separability of the representations evolves for networks constructed with $k>1$ auxiliary problems. 
Finally we ask whether the notion of building progressively better inputs to a linear model ($k=1$ training) has an analogous counterpart for $k > 1$:
building progressively better inputs for shallow CNNs (discussed in Sec~\ref{sec:aux_prop}).

We define \textit{linear separability} of a representation as the maximum accuracy achievable by a linear classifier. Further we define the notion of \textit{CNN-}p\textit{-separability} as the  accuracy achieved by a $p$-layer CNN trained on top of the representation to be assessed.

We focus on CNNs trained on CIFAR-10 without downsampling. Here, $J=5$ and the layer sizes follow $M=64,128,256$. 
The auxiliary classifier feature map size, when applicable, is $\tilde M = 256$. 
We train with 5 random initializations for each network and report an average standard deviation of 0.58\% test accuracy. 
Each layer is evaluated by training a one-versus-rest logistic regression, as well as $p=1,2$-hidden-layer CNN on top of these representations. 
Because the linear representation has been optimized for it, we spatially average to a $2\times 2$ shape before feeding them to the separability evaluation. 
Fig. \ref{fig:cifar_exp_linear} shows the results of each of these evaluations plotting test set accuracy curves as a function of NN depth for each of the 3 evaluations. 
For these plots we averaged over initial layer sizes $M$ and classifier layer sizes $\tilde M$ and random seeds. Each individual curve closely resembles these average curves, with slight shifts in the y-axis.

As expected  from Sec. \ref{sec:aux_prop}, linear separability monotonically increases with  depth  for $k=1$. Interestingly,  linear separability also improves in the case of $k>1$, even though it is not directly specified by the auxiliary problem objective. 
At earlier layers, linear separation capability of models trained with $k=1$ increases fastest as a function of layer depth compared to models trained with deeper auxiliary networks, but flattens out to a lower asymptotic linear separability at deeper layers. Thus, the simple principle of the $k=1$ objective that tries to produce the maximal linear separation at each layer might not be an optimal strategy for achieving progressive linear separation. 

We also notice that the deeper the auxiliary classifier, the slower the increase in linear separability initially, but the higher the linear separability at deeper layers. From the two right diagrams we also find that the CNN-$p$-separability progressively improves - but much more so for $k>1$ trained networks. 
This shows that linear separability of a layer is not the sole criterion for rendering a representation a good "input" for a CNN.
It further shows that our sequential training procedure for the case $k>1$ can  build a representation that is progressively a better input to a shallow CNN. %

\begin{figure*}
  \begin{minipage}{\textwidth}
  \begin{minipage}[b]{0.73\textwidth}
       \begin{center}
      \includegraphics[width=\linewidth]{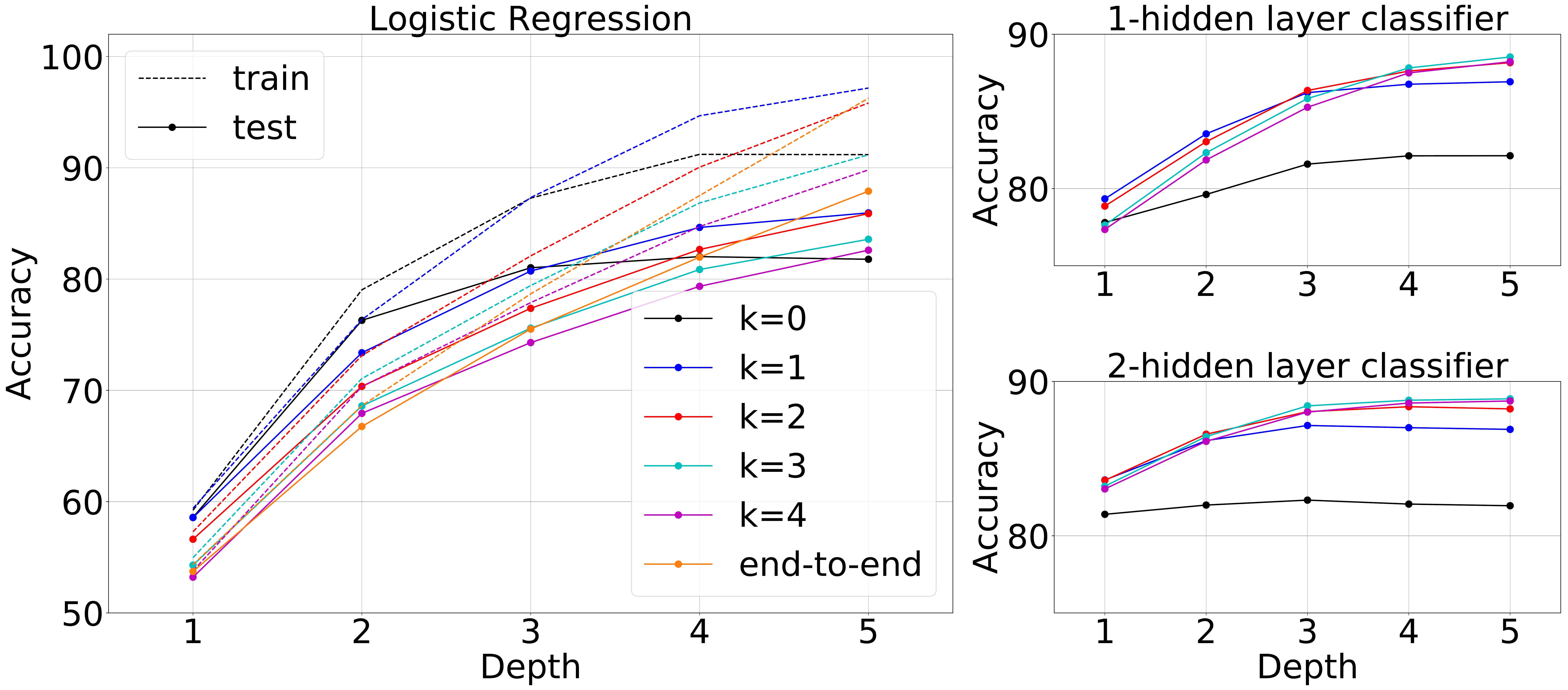}\vspace{-0.5cm}
       \end{center}

  \end{minipage}
  \hfill
  \begin{minipage}[b]{0.24\textwidth}
    \centering
    \begin{tabular}{|c|c|}
     \hline
    \textbf{\makecell{}}  & \makecell{Top-5} \\\hline
        \makecell{$\tilde{M}_f=2048$  }    & \makecell{ 88.7 }\\\hline
        \makecell{$\tilde{M}_f=512$  }& 88.5\\\hline
     \end{tabular}\captionof{table}{We compare ImageNet SimCNN ($k=3$) with a much smaller final auxiliary model. The final accuracy is diminished only slightly while the model becomes drastically smaller.\vspace{0.2cm} \label{tab:width_aux}}
    \end{minipage} \captionof{figure}{(Left) Linear and (Right) CNN-$p$ separability as a function of depth for CIFAR-10 models. For Linear separability we aggregate across $M=64,128,256$, individual results are shown in Appendix ~\ref{app:cifar_exp}, the relative trends are largely unchanged. For CNN-$p$ probes, all models achieve $100\%$ train accuracy at the first or 2nd layer, thus only test accuracy is reported.\vspace{-15pt}\label{fig:cifar_exp_linear}}
  \end{minipage}
  
\end{figure*}
\vspace*{-8pt}
\subsection{Scaling up Layerwise CNNs with 2 and 3 Hidden Layer Auxiliary Problems}\label{sec:scaling}

\begin{table}
\centering
\begin{tabular}{|c|c|}
\hline
\textbf{\makecell{Layer-wise Trained}}& \makecell{  Acc.  (Ens.)}\\\hline
    SimCNN ($k=1$ train )& \makecell{  88.3  (88.4)} \\\hline
    SimCNN ($k=2$ train) &   \makecell{ 90.4 (90.7)}\\\hline
    SimCNN($k=3$ train) &  \makecell{ 91.7  (\textbf{92.8})}\\\hline
    BoostResnet   \citep{huang2017learning} & 82.1 \\\hline
    ProvableNN (Malach et al., 2018)& 73.4  \\\hline
   (Mosca et al., 2017)
   & 81.6\\\hline
     
     \textbf{Reference e2e}& \\\hline
    \makecell{  AlexNet } & 89 \\\hline
     VGG $^1$ & 92.5 \\\hline
    WRN 28-10 (Zagoruyko et al. 2016)
    & \textbf{96.0} \\
    \hline
    
    \hline
    \textbf{Alternatives} & [Ref.] \\\hline
     \makecell{Scattering + Linear \\} & 82.3 \\\hline
     \makecell{ FeedbackAlign \cite{bartunov2018assessing}
     }& 62.6 [67.6]\\
     \hline\end{tabular}\caption{Results on CIFAR-10. Compared to the few existing methods using \textit{only} layerwise training schemes we report much more competitive results to well known benchmark models that like ours do not use skip connnections.In brackets e2e trained version of the model is shown when available.
     \label{CIFAR}\vspace{-30pt}\\}
     \end{table}
We study the training of deep networks with $k=2,3$ hidden layer auxiliary problems. 
We limit ourselves to this setting to keep the auxiliary NNs shallow with respect to the network depth.
We employ widths of $M=128$ and $\tilde{M}=256$ for both CIFAR-10 and ImageNet. 
For CIFAR-10, we use $J=4$ layers and a down-sampling  at  $j=2$. 
For ImageNet we closely follow the VGG architectures, which with their $3\times 3$ convolutions and absence of skip-connections bear strong similarity to ours.
We use $J=8$ layers. 
As we start at halved resolution we do only 3 down-samplings at $j=2,4,6$. 
Unlike the $k=1$ case we found it helpful to employ batch-norm for these auxiliary problems. 

\begin{table}[t]
\centering
\begin{tabular}{|c|c|c|}
 \hline
\textbf{\makecell{}}  &  \makecell{Top-1 (Ens.)}  &\makecell{Top-5 (Ens.)} \\\hline
     
    SimCNN ($k=1$ train) & \makecell{  58.1 (59.3)}  &  \makecell{ 79.7 (80.8)}   \\\hline
    SimCNN ($k=2$ train)   &  \makecell{ 65.7 (67.1) }&  \makecell{ 86.3 (87.0) }  \\\hline
    SimCNN ($k=3$ train)    & \makecell{  69.7  (71.6)} &  \makecell{ 88.7 (89.8)}\\\hline
    VGG-11 ($k=3$ train) & \makecell{ 67.6  (70.1)} & \makecell{88.0 (89.2)} \\\hline
    VGG-11 (e2e train) &  67.9  & 88.0 \\\hline\hline

    \textbf{Alternative} & [Ref.] & [Ref.]\\\hline
      \makecell{ DTargetProp  \\
     \cite{bartunov2018assessing}
      }&  \makecell{1.6 $[28.6]$}& \makecell{5.4 $[51.0]$} \\\hline
      \makecell{ FeedbackAlign \\ \cite{Xiao2019}
      }&  \makecell{ 6.6  $[50.9]$}& \makecell{16.7  $[75.0]$} \\\hline
    \makecell{ Scat. + Linear \\ \citep{PAMIScat}
     }& 17.4 & N/A \\\hline
     \makecell{ Random CNN \\ 
     }& 12.9 & N/A \\\hline
      \makecell{ FV + Linear \\ \citep{sanchez2013image}
     }& 54.3 & 74.3 \\\hline
     \textbf{Reference e2e CNN}  &  &\\\hline
        AlexNet  & 56.5 & 79.1\\\hline
    VGG-13  & 69.9 & 89.3 \\\hline
    VGG-19  & 72.9 & 90.9 \\\hline
    Resnet-152  & 78.3 & 94.1 \\\hline
 
 \end{tabular}\caption{\label{tab:imagenet_full}
 Single crop validation acc. on ImageNet. Our SimCNN models use $J=8$. In parentheses see the ensemble prediction. Layer-wise models are competitive with well known ImageNet benchmarks that similarly don't use skip connections. $k=3$ training can yield equal performance to end to end on VGG-11. We highlight many methods and alternative training do not work at all on ImageNet. In brackets, e2e acc.  is shown when available.\vspace{-19pt}}\end{table}
 
 We report our results for $k=2,3$ in Table \ref{CIFAR} (CIFAR-10) and  Table~\ref{tab:imagenet_full} (ImageNet) along with our $k=1$ model. The reference model accuracies for ImageNet AlexNet, VGG, and ResNet use the same input sizes and single crop evaluation\footnote{\small
Accuracies  from  http://torch.ch/blog/2015/07/30/cifar.html and https://pytorch.org/docs/master/torchvision/models.html.}.
As expected from the previous section, the transition from $k=1$ to $k=2,3$ improves the performances substantially.
We compare our CIFAR-10 results to other sequentially trained propositions in the literature.
Our methods exceed these in performance by a large margin.
While the ensemble model of $k=3$ surpasses the VGG, the other sequential models perform do not exceed unsupervised methods. 
No alternative sequential models are available for ImageNet.  
We thus compare our results on ImageNet to the standard reference CNNs and the best-performing alternatives to end-to-end Deep CNNs. Our $k=3$ layerwise ensemble model achieves \textbf{89.8\%} top-5 accuracy, which is comparable to VGG-13 and largely exceeds AlexNet performance. Recall that $\tilde{M}_f$ denotes the width of the final auxiliary network, which becomes part of the model. In the experiments above this is relatively large  ($\tilde{M}_f=2048$), while the final layers have diminishing returns.  
To
more effectively incorporate the final auxiliary network into the model architecutre, we reduce the width of the final \textit{auxiliary} network for the $k=3$ recomputing the $j=7$ step. We use auxiliary networks of size $\tilde{M}_f=512$ (instead of $2048$). In Table \ref{tab:width_aux} we observe that while the model size is reduced substantially, 
there is
only a limited loss of accuracy. 
Thus,
the final auxiliary model structure does not heavily affect performance, suggesting we can train more generic architectures with this approach.
 
\paragraph{Comparison to end-to-end VGG} We now compare this training method to end-to-end learning directly on a common model, VGG-11. We use $k=3$ training for all but the final convolutional layer.
We additionally reduce the spatial resolution by $2\times$ before applying the auxiliary convolutions. As in the last experiment we use a different final auxiliary network so that the model architecture matches VGG-11. The last convolutional layers' auxiliary model simply becomes the final max-pooling and 2-hidden layer fully connected network of VGG. We train a baseline VGG-11 model using the same 45 epoch training schedule.  The performance of the $k=3$ training strategy matches that of the end-to-end training, with the ensemble model being better. The main differences of this model to SimCNN, besides the final auxiliary, is the max-pooling and the input image starting from full size. 
 

Reference models relying on residual connections and very deep networks have better performance than those considered here. We believe that one can extend layer-wise learning to these modern techniques. However, this is outside the scope of this work. Moreover, recent ImageNet models (after VGG) are developed in industry settings, with large-scale infrastructure available for architecture and hyper-parameter search. Better design of sub-problem optimization geared for this setting may further improve results.

We emphasize that this approach enables the training of larger layer-wise models than end-to-end ones on the same hardware.
This suggests applications in fields with large models (e.g. 3-D vision and medical imaging). 
We also observed that using outputs of early layers that were not yet converged still permitted improvement in subsequent layers.
This suggests that our work might allow an extension that solves the auxiliary problems in parallel to a certain degree.

\paragraph{Layerwise Model Compression}
Wide, overparametrized, layers have been shown to be important for learning \citep{neyshabur2018towards}, but it is often possible to reduce the layer size \textit{a posteriori} without losing significant accuracy  ~\citep{hinton2014dark,lecun1990optimal}.
For the specific case of CNNs, one technique removes channels heuristically and then fine-tunes \citep{molchanov2016pruning}. In our setting, a natural strategy presents itself, which integrates compression into the learning process:
\textbf{(a)} train a new layer (via an auxiliary problem) and 
\textbf{(b)} immediately apply model compression to the new layer.
The model-compression-related fine-tuning operates over a single layer, making it fast and the subsequent training steps have a smaller input and thus fewer parameters, which speeds up the sequential training.
We implement this approach using the filter removal technique of \citet{molchanov2016pruning} only at each newly trained layer, followed by a fine-tuning of the auxiliary network.
We test this idea on CIFAR-10. A baseline network of 5 layers of size $64$ (no downsampling, trained for 120 epochs and lr drops each 25 epochs) obtains an end-to-end performance of $87.5\%$.
We use our layer-wise learning with $k=3,J=3,M=128,\tilde{M}=128$.
At each step we prune each layer from $128$ to $64$ filters and subsequently fine-tune \textit{the auxiliary network} to the remaining features over 20 epochs.
We then use a final auxiliary of $\tilde{M}_f=64$ obtaining a sequentially learned, final network of the same architecture as the baseline.
The final accuracy is $87.6\%$, which is very close to the baseline.
We note that each auxiliary problem incurs minimal reduction in accuracy through feature reduction.


\vspace{-10pt}
\section{Conclusion}\label{sec:discussion}
\vspace{-5pt}
We have shown that an alternative to end-to-end learning of CNNs relying on simpler sub-problems and no feedback between layers can scale to large-scale benchmarks such as ImageNet and can be competitive with standard CNN baselines.
We build these competitive models by training only shallow CNNs and using standard architectural elements (ReLU, convolution). Layer-wise training opens the door to applications such as larger models under memory constraints, model prototyping, joint model compression and training, parallelized training, and more stable training for challenging  scenarios.
Importantly, our results suggest a number of open questions regarding the mechanisms that underlie the success of CNNs and provide a major simplification for theoretical research aiming at analyzing high performance deep learning models. Future work can study whether the 1-hidden layer cascades objective can be better specified to more closely mimic the $k>1$ objective.  


\bibliographystyle{icml2019}
\bibliography{bibliography_for_papers}
\newpage
\clearpage

\appendix

\section{Proof of Proposition }\label{appendix:proof_prop_opt}

\setcounter{section}{3}
\setcounter{prop}{0}
\renewcommand{\theprop}{\arabic{section}.\arabic{prop}}\begin{prop}[Progressive improvement] Assume that $P_j=Id$. Then there exists $\theta_0$ such that:
\[\hat{\mathcal{R}}(z_{j+1};\theta_{j}^*,\gamma_{j}^*)\leq\hat{\mathcal{R}}(z_{j+1};\theta_0,\gamma_{j-1}^*)=\hat{\mathcal{R}}(z_{j};\theta_{j-1}^*,\gamma_{j-1}^*)\,.\]\end{prop}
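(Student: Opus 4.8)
The plan is to exhibit an explicit $\theta_0$ that makes the newly appended layer act as the identity on the data, so that the auxiliary problem at depth $j$ contains the one at depth $j-1$ as a special case. Since $P_j = Id$ we have $x_{j+1} = \rho W_{\theta_j} x_j$, and I would take $\theta_0$ to be the parameters of the convolution whose $3\times 3$ kernels are zero except for a $1$ at the spatial center linking each input channel to the matching output channel, with all biases zero, so that $W_{\theta_0}$ is the identity map. The key observation is that for $j\geq 1$ the feature map $x_j = \rho W_{\theta_{j-1}^*} P_{j-1} x_{j-1}$ is the output of a ReLU and hence coordinatewise nonnegative, so that $\rho W_{\theta_0} x_j = \rho x_j = x_j$.

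Next I would check that $\gamma_{j-1}^*$ is an admissible classifier parameter at depth $j$: because $P_j = Id$, the tensor $x_{j+1}$ has the same number of channels (and spatial size) as $x_j$, so the head $C_{\gamma_j}$ — the linear $LA$ when $k=1$, or the constant-width CNN $LA\rho\tilde W_{k-2}\cdots\rho\tilde W_0$ when $k>1$ — has exactly the same input shape at depth $j$ as at depth $j-1$, and setting $\gamma_j = \gamma_{j-1}^*$ is well defined. Combining the two steps, for every training sample $z_{j+1}(x^n;\theta_0,\gamma_{j-1}^*) = C_{\gamma_{j-1}^*}\,\rho W_{\theta_0} x_j^n = C_{\gamma_{j-1}^*} x_j^n = z_j(x^n;\theta_{j-1}^*,\gamma_{j-1}^*)$, and averaging the per-sample losses yields the right-hand equality $\hat{\mathcal{R}}(z_{j+1};\theta_0,\gamma_{j-1}^*) = \hat{\mathcal{R}}(z_j;\theta_{j-1}^*,\gamma_{j-1}^*)$.

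The remaining inequality is then immediate from the nature of the layerwise step: $(\theta_j^*,\gamma_j^*)$ is the output of a procedure minimizing $\hat{\mathcal{R}}(z_{j+1};\cdot,\cdot)$ which, by the technical requirement recalled just after the statement (keeping the best iterate along the optimization trajectory), never returns a value worse than its initialization; initializing that procedure at $(\theta_0,\gamma_{j-1}^*)$ gives $\hat{\mathcal{R}}(z_{j+1};\theta_j^*,\gamma_j^*) \leq \hat{\mathcal{R}}(z_{j+1};\theta_0,\gamma_{j-1}^*)$, which chains with the equality above to complete the proof.

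There is no genuinely hard analytic step here; the only points needing a little care are (i) that a $3\times 3$ convolution with ``same'' padding represents the identity exactly — a boundary-handling check — and (ii) the nonnegativity argument, which uses that $x_j$ is a ReLU output and so is really intended for $j\geq 1$ (at $j=0$ the input need not be nonnegative, but the statement is applied along the cascade, where each $x_j$ with $j\ge 1$ passes through $\rho$). The hypothesis $P_j = Id$ is used precisely to keep channel and spatial dimensions matched so that $\gamma_{j-1}^*$ transfers unchanged; without it one would additionally have to adapt the first layer of the classifier, which is why the assumption is imposed.
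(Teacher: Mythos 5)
Your proof is correct and follows essentially the same route as the paper: the paper also chooses $\theta_0$ with $W_{\theta_0}=Id$ and invokes the idempotency $\rho(\rho(x))=\rho(x)$, which is exactly your nonnegativity-of-$x_j$ observation, and then concludes by optimality of the layerwise step. Your additional checks (the centered-delta kernel realizing the identity, shape compatibility of $\gamma_{j-1}^*$ under $P_j=Id$, and the best-iterate requirement for the inequality) are just a more careful spelling-out of the same argument.
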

\begin{proof}
As $\rho(\rho( x)) = \rho(x)$, we simply have to chose $\theta_0$ such that $W_{\theta_0}=Id$.
\end{proof}

We will now show that given an optimization  $\epsilon$-optimal procedure for the sub-problem optimization, the optimization of Algo 1. can be used directly to obtain an error on the overall. solution. Denote the parameters $\{\theta_{1}^*,...,\theta_{J}^*\}$ the optimal solutions for Algo 1.
\begin{prop}
Assume the parameters $\{\theta_{0}^*,...,\theta_{J-1}^*\}$ are obtained via a optimal layerwise optimization procedure. We assume that $ W_{\theta_j^*}$ is 1-lipschitz without loss of generality and that the biases are bounded uniformly by $B$. Given an input function $g(x)$, we consider functions of the type $z_{g}(x)=C_\gamma \rho W_\theta g(x)$. For $\epsilon>0$, we call $\theta_{\epsilon,g}$ the parameter provided by a procedure to minimize $\hat{\mathcal{R}}(z_{g};\theta;\gamma)$ which leads to a 1-lipschitz operator that satisfies:
1.~$ \underbrace{\Vert \rho W_{\theta_{\epsilon,g}}g(x) -\rho W_{\theta_{\epsilon,\tilde g}}\tilde{g}(x)\Vert\leq\Vert g(x)-\tilde{g}(x)\Vert}_\text{ (stability)},\forall g,\tilde g,\quad$ \\ 2.~~~~~~~$\underbrace{\Vert W_{\theta_j^*} x_j^*-W_{\theta_{\epsilon,x_j^*}} x_j^*\Vert\leq\epsilon(1+\Vert  x_j^* \Vert)}_\text{ ($\epsilon$-approximation)},$

with, $\hat x_{j+1}=\rho W_{\theta_{\epsilon,\hat x_j}}\hat x_j$ and $ x_{j+1}^*=\rho W_{\theta_j^*} x_j^*$ with $x_0^*=\hat x_0=x$, then, we prove by induction:\begin{equation}
\Vert  x_J^*-\hat x_{J}\Vert=\mathcal{O}(J^2\epsilon) \end{equation}\end{prop}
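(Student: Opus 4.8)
The plan is to prove the bound by induction on the layer index, controlling the error $\delta_j \triangleq \Vert x_j^* - \hat x_j\Vert$ via a linear recursion whose inhomogeneous term grows linearly in $j$; summing it then produces the quadratic $\mathcal{O}(J^2\epsilon)$ estimate. The induction hypothesis is simply a running bound on $\delta_j$, together with an a priori bound on $\Vert x_j^*\Vert$.

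First I would bound the norm of the optimal trajectory. Since $\rho$ is $1$-Lipschitz with $\rho(0)=0$ (so $\Vert\rho(y)\Vert \le \Vert y\Vert$), and each $W_{\theta_j^*}$ is affine with $1$-Lipschitz linear part and bias of norm at most $B$, we get $\Vert x_{j+1}^*\Vert = \Vert \rho W_{\theta_j^*} x_j^*\Vert \le \Vert W_{\theta_j^*} x_j^*\Vert \le \Vert x_j^*\Vert + B$, hence $\Vert x_j^*\Vert \le \Vert x\Vert + jB$. This is the only place the uniform bias bound and the $1$-Lipschitz assumption on $W_{\theta_j^*}$ are used; note that no analogous bound on $\Vert \hat x_j\Vert$ is needed.

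Next I would derive the one-step recursion. Writing $\delta_{j+1} = \Vert \rho W_{\theta_j^*} x_j^* - \rho W_{\theta_{\epsilon,\hat x_j}}\hat x_j\Vert$ and inserting the intermediate term $\rho W_{\theta_{\epsilon, x_j^*}} x_j^*$, the triangle inequality gives two pieces: (i) $\Vert \rho W_{\theta_j^*} x_j^* - \rho W_{\theta_{\epsilon, x_j^*}} x_j^*\Vert$, which by $1$-Lipschitzness of $\rho$ and the $\epsilon$-approximation hypothesis is at most $\epsilon(1+\Vert x_j^*\Vert) \le \epsilon(1 + \Vert x\Vert + jB)$ using Step 1; and (ii) $\Vert \rho W_{\theta_{\epsilon, x_j^*}} x_j^* - \rho W_{\theta_{\epsilon, \hat x_j}}\hat x_j\Vert$, which is precisely the quantity bounded by the stability hypothesis, yielding at most $\Vert x_j^* - \hat x_j\Vert = \delta_j$. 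Hence $\delta_{j+1} \le \delta_j + \epsilon(1 + \Vert x\Vert + jB)$.

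Finally, since $x_0^* = \hat x_0 = x$ gives $\delta_0 = 0$, unrolling yields $\delta_J \le \epsilon\sum_{j=0}^{J-1}(1 + \Vert x\Vert + jB) = \epsilon\big(J(1+\Vert x\Vert) + B\,\tfrac{J(J-1)}{2}\big) = \mathcal{O}(J^2\epsilon)$, treating $\Vert x\Vert$ and $B$ as constants. The one delicate point — the step I would be most careful with — is the correct invocation of the stability property in piece (ii): one must regard $x_j^*$ and $\hat x_j$ as the outputs of the two compositions $\rho W_{\theta_{j-1}^*}\cdots$ and $\rho W_{\theta_{\epsilon,\cdot}}\cdots$, so that they play the roles of $g(x)$ and $\tilde g(x)$ in the hypothesis, and observe that the parameter $\theta_{\epsilon,\hat x_j}$ appearing in $\hat x_{j+1}$ is exactly the one the stability hypothesis pairs with input $\hat x_j$. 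Everything else is routine triangle-inequality bookkeeping.
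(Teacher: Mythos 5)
Your proposal is correct and follows essentially the same argument as the paper's proof: the same a priori bound $\Vert x_j^*\Vert\leq \Vert x\Vert+jB$, the same insertion of the intermediate term $\rho W_{\theta_{\epsilon,x_j^*}}x_j^*$, and the same use of non-expansivity, the $\epsilon$-approximation, and stability to obtain the recursion $\delta_{j+1}\leq \delta_j+\epsilon(1+\Vert x\Vert+jB)$. The only cosmetic difference is that you unroll the recursion by summation whereas the paper verifies the closed-form bound directly by induction.
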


\begin{proof}
First observe that $\Vert x^*_{j+1}\Vert\leq \Vert x_j^* \Vert + B$ by non expansivity. Thus, by induction, $\Vert x_j^*\Vert \leq jB + \Vert x \Vert$. Then, let us show that: $\Vert  x_j^*-\hat x_j \Vert \leq \epsilon (\frac{j(j-1)}{2}B+j\Vert x\Vert+j)$ by induction. Indeed, for $j+1$:

\begin{align*}
\Vert &x^*_{j+1}-\hat x_{j+1}\Vert\\
& =\Vert \rho W_{\theta_j^*}x_j^*-\rho W_{\theta_{\epsilon,\hat x_j}} \hat x_j \Vert \\
& =  \Vert \rho W_{\theta_j^*}x_j^*-\rho W_{\theta_{\epsilon,x_j^*}} x_j^*+\rho W_{\theta_{\epsilon,x_j^*}} x_j^*-\rho W_{\theta_{\epsilon,\hat x_j}} \hat x_j \Vert\\
&\text{ by non-expansivity}\\
& \leq   \Vert  W_{\theta_j^*}x_j^*- W_{\theta_{\epsilon,x_j^*}} x_j^* \Vert + \Vert \rho W_{\theta_{\epsilon,x_j^*}} x_j^*-\rho W_{\theta_{\epsilon,\hat x_j}} \hat x_j \Vert \\
& \leq  \epsilon \Vert x_j^*\Vert + \epsilon + \Vert  x_j^*-\hat x_j \Vert\text{ from the assumptions}\\
& \leq  \epsilon (jB+\Vert x\Vert+1)+ \Vert  x_j^*-\hat x_j \Vert\text{ from above} \\
&\text{ by induction}\\
& \leq  \epsilon (jB+\Vert x\Vert+1)+ \epsilon (\frac{j(j-1)}{2}B+j+j\Vert x\Vert) \Vert \\
&=   \epsilon (\frac{j(j+1)}{2}B+(j+1)\Vert x\Vert+(j+1)) 
\end{align*}
As $x^*_0=\hat x_0=x$,  the property is true for $j=0$.

\end{proof}

\subsection*{A Note on Sec.2 From \cite{mallat2016understanding}}
We first briefly discuss the result of Sec.2 of \cite{mallat2016understanding}. Let us introduce: $\Omega_j = \{x : \forall x', f(x)\neq f(x') \Rightarrow \rho W_{\hat\theta_{j-1}}\cdots \rho W_{\hat\theta_{1}}x \neq \rho W_{\hat\theta_{j-1}} \cdots \rho W_{\hat\theta_{1}}x'\}$. We introduce: $\mathcal{Y}_j=\{y : \exists x \in \Omega_j, y=\rho W_{\hat\theta_{j-1}}\cdots \rho W_{\hat\theta_{1}}x\}$. For $y\in \mathcal{Y}_j$, we define $\hat{f}_j(y)\triangleq f(x)$. Observe this defines indeed a function, and that:
\[\forall x\in \Omega_j, f(x)=\hat{f}_j\circ \rho W_{\hat\theta_{j-1}}\cdots\rho W_{\hat\theta_{1}}x\]

Observe also that $\Omega_{j+1}\subset \Omega_j$. The set $\Omega_j$ is simply the set of samples which are well discriminated by the neural network $\rho W_{\hat\theta_{j-1}}\cdots\rho W_{\hat\theta_{1}}$.

\setcounter{section}{1}

\section{Additional Details on Imagenet Models and Performance}
\label{app:layerwise}
For ImageNet we report the improvement in accuracy obtained by adding layers in Figure~\ref{fig:prgnlin} as seen by the auxiliary problem solutions. We observe that indeed the accuracy of the model on both the training and validation is able to improve from adding layers as discussed in depth in Section~\ref{sec:prog_exp}. We observe that $k=1$ also over-fits substantially, suggesting better regularization can help in this setting.
\begin{figure}
    \centering
    \includegraphics[scale=0.35]{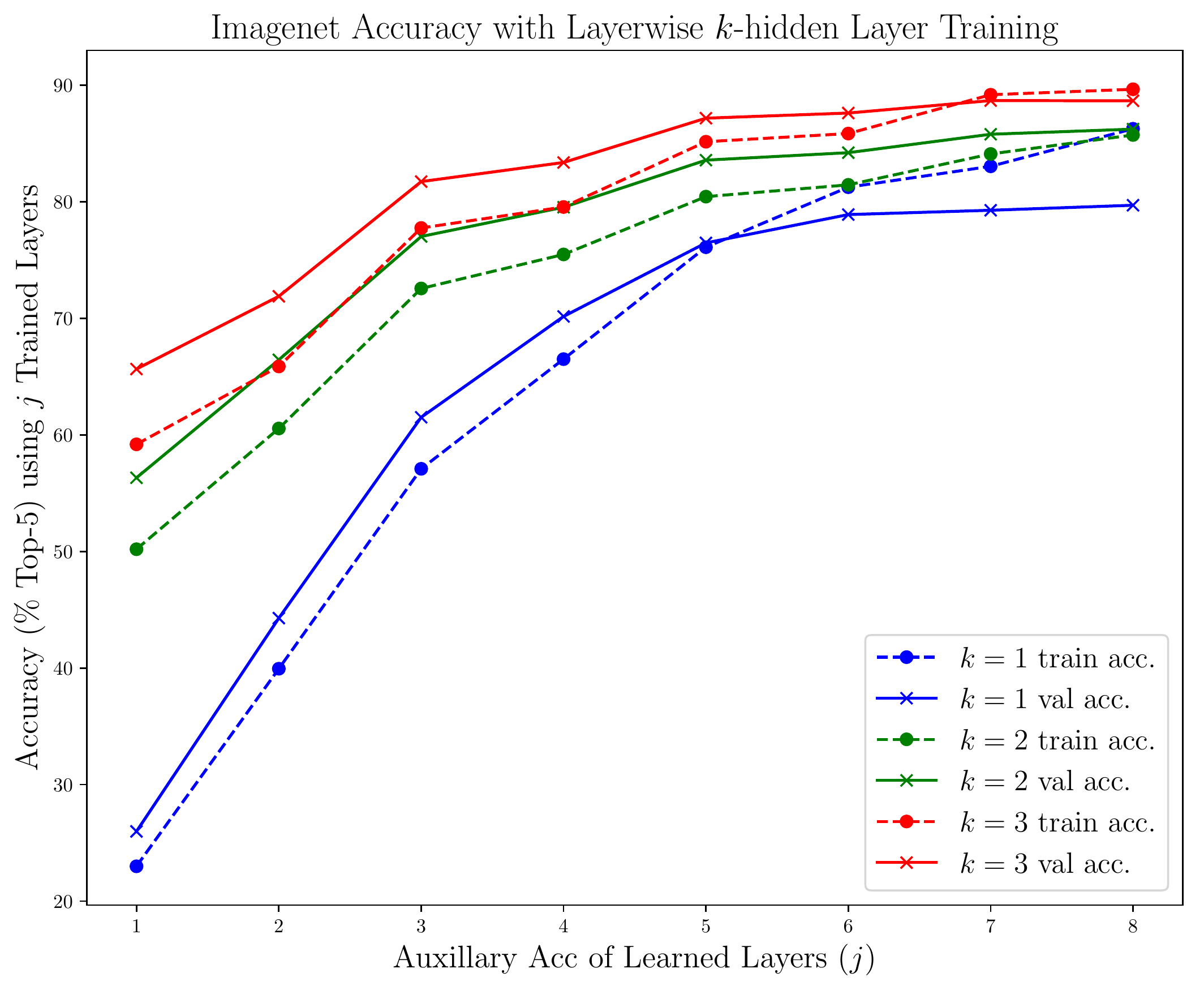}
    \caption{Intermediate Accuracies of models in Sec. ~\ref{sec:scaling}. We note that the $k=1$ model is larger than the $k=2,3$ models.}
    \label{fig:prgnlin}
\end{figure}

We provide a more explicit view of the network sizes in Tab. \ref{tab:nlin12} and Tab. \ref{tab:nlin0}. We also show the number of parameters in the ImageNet networks in Tab. \ref{tab:imagenet_params}. Although some of the models are not as parameter efficient compared to the related ones in the literature, this was not a primary aim of the investigation in our experiments and thus we did not optimize the models for parameter efficiency (except explicitly at the end of Sec. \ref{sec:scaling}), choosing our construction scheme for simplicity. We highlight that this is not a fundamental problem in two ways: (a) for the $k=1$ model we note that removing the last two layers reduces the size by $1/4$, while the top 5 accuracy at the earlier J=6 layer is $78.8$ (versus $79.7$), see Figure~\ref{fig:prgnlin} for detailed accuracies. (b) Our models for $k=2,3$ have most of their parameters in the final auxiliary network which is easy to correct for once care is applied to this specific point as at the end of Sec.~\ref{sec:scaling}. We note also that the model with $k=3,M_f=512$, is actually more parameter efficient than those in the VGG family while having similar performance. We also point out that we use for simplicity the VGG style construction involving only $3x3$ convolutions and downsampling operations that only half the spatial resolution, which indeed has been shown to lead to relatively less parameter efficient architectures \cite{he2016deep}, using less uniform construction (larger filters and bigger pooling early on) can yield more parameter efficient models.

\begin{table}[]
    \centering
    \begin{tabular}{c|c|c}
        Layer & spatial size & layer output size \\\hline
            Input & $112\times112$ & 12\\\hline
            1  & $112\times112$& 128\\\hline
            2  & $112\times112$& 128\\\hline
            3& $56\times56$ & 256\\\hline
            4  & $56\times56$& 256\\\hline
            5  & $28\times28$& 512\\\hline
            6 & $28\times28$ & 512\\\hline
            7  & $14\times14$& 1024\\\hline
            8  & $14\times14$& 1024
    \end{tabular}
    \caption{Network structure for $k=2,3$ imagenet models, not including auxiliary networks. Note an invertible downsampling is applied on the input 224x224x3 image to producie the initial input. The default auxillary networks for both have $\tilde{M}_f=2048$ with 1 and 2 auxillary layers, respectively. Note auxiliary networks always reduce the spatial resolution to $2x2$ before the final linear layer.}
    \label{tab:nlin12}
\end{table}
\begin{table}[]
    \centering
    \begin{tabular}{c|c|c}
        Layer & spatial size & layer output size \\\hline
            Input & $112\times112$ & 12\\\hline
            1  & $112\times112$& 256\\\hline
            2  & $112\times112$& 256\\\hline
            3& $56\times56$ & 512\\\hline
            4  & $28\times28$& 1024\\\hline
            5  & $14\times14$& 2048\\\hline
            6 & $14\times14$ & 2048\\\hline
            7  & $7\times7$& 4096\\\hline
            8  & $7\times7$& 4096
    \end{tabular}
    \caption{Network structure for $k=1$ ImageNet models, not including auxiliary networks. Note an invertible down-sampling is applied on the input 224x224x3 image to produce the initial input. Note this network does not include any batch-norm.}
    \label{tab:nlin0}
\end{table}
\begin{table}
\centering
\begin{tabular}{|c|c|}
\hline

\textbf{\makecell{Layer-wise Trained}}& \makecell{  Acc.\\ }\\\hline
    Strided Convolution & \makecell{  87.8} \\\hline
    Invertible Down &   \makecell{88.3}\\\hline
    AvgPool &  \makecell{ 87.6}\\\hline
    MaxPool &  \makecell{ 88.0}\\\hline
\end{tabular}
  \caption{Comparison of different downsampling operations}
  \label{app:pool}
\end{table}
\begin{table}
\begin{tabular}{|c|c|c|}
 \hline
Models  & Number of Parameters \\\hline
    SimCNN $k=3$, $M_f=512$ & 46M \\\hline
    SimCNN $k=3$ & 102M \\\hline
    SimCNN $k=2$ & 64M \\\hline
    SimCNN $k=1, J=6$&  96M \\\hline
        AlexNet  & 60M \\\hline
    VGG-16  &  138 M\\\hline
 
 \end{tabular}\caption{Overall parameter counts for SimCNN models trained in Sec. ~\ref{sec:exp} and from literature.}\label{tab:imagenet_params}
\end{table}
\section{Additional Studies}
We report additional studies that elucidate the critical components of the system and demonstrate the transferability properties of the greedily learned features.

\subsection{Choice of Downsampling}

In our experiments we use primarily the invertible downsampling operator as the downsampling operation. This choice is to reduce architectural elements which may be inherently lossy such as average pooling. Compared to maxpooling operations it also helps to maintain the network in Sec.~\ref{sec:alex} as a pure ReLU network, which may aid in analysis as the maxpooling introduces an additional non-linearity. We show here the effects of using alternative downsampling approaches including: average pooling, maxpooling and strided convolution. On the CIFAR dataset in the setting of $k=1$ we find that they ultimately lead to very similar results with invertible downsampling being slightly better. This shows the method is rather general. In our experiments we follow the same setting described for CIFAR. The setting here uses $J=5$ and downsamplings at $j=2,3$. The size is always halved in all cases and the downsampling operation and the output sizes of all networks are the same. Specifically the Average Pooling and Max Pooling use $2\times2$ kernels and the strided convolution simply modifies the $3\times3$ convolutions in use to have a stride of $2$. Results are shown in Tab. ~\ref{app:pool}.

\subsection{Effect of Width}
\label{app:cifar_exp}

We report here an additional view of the aggregated results for linear separability discussed in Sec.~\ref{sec:prog_exp}. We observe that the trend of the aggregated diagram is similar when comparing only same sized models, with the primary differences in model sizes being increased accuracy.

\begin{figure}
    \centering
    \includegraphics[width=0.9\linewidth]{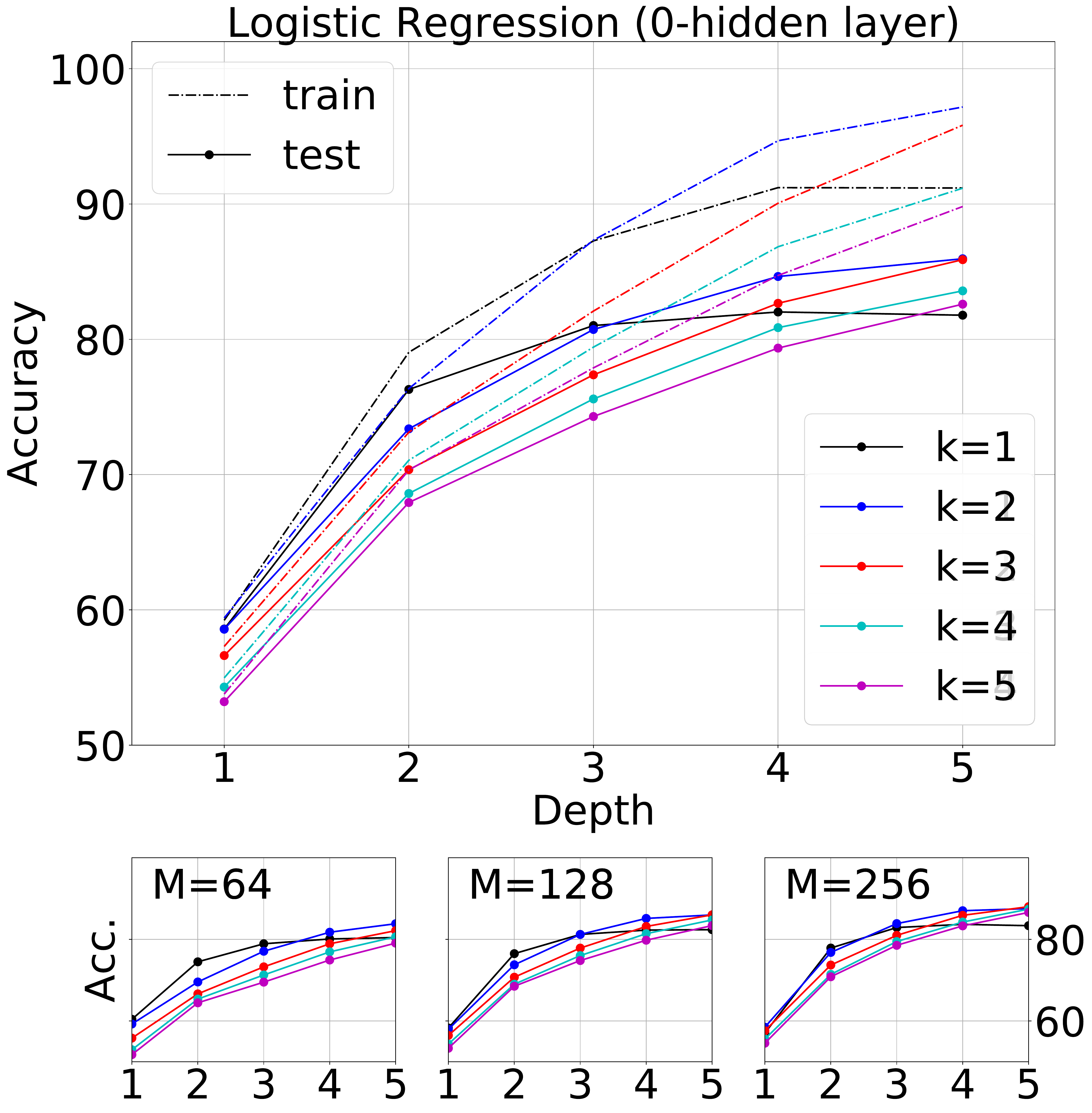}
    \caption{Linear separability of differently trained sequential models. We show how the data varies for the different $M$, observing similar trends to the aggregated data.}
    \label{fig:layer_size}
\end{figure}

\subsection{Transfer Learning on Caltech-101}
\label{app:transfer}
Deep CNNs such as AlexNet trained on Imagenet are well known to have generic properties for computer vision tasks, permitting transfer learning on many downstream applications. We briefly evaluate here if the  $k=1$ imagenet model (Sec. ~\ref{sec:alex}) shares this generality on the  Caltech-101 dataset. This dataset has 101 classes and we follow the same standard experimental protocol as \cite{zeiler2014visualizing}: 30 images per class are randomly selected, and the rest is used for testing. The average per class accuracy is reported using 10 random splits. As in \cite{zeiler2014visualizing} we restrict ourselves to a linear model. We use a multinomial logistic regression applied on features from different layers including the final one. For the logistic regression we rely on the default hyperparameter settings for logistic regression of the \texttt{sklearn} package using the SAGA algorithm. We apply a linear averaging and PCA transform (for each fold) to reduce the dimensionality to $500$ in all cases. We find the results are similar to those reported in \cite{zeiler2014visualizing} for their version of the AlexNet. This highlights the model has similar transfer properties and also shows similar progressive linear separability properties as end-to-end trained models. 

\begin{table}[]
    \centering
    \begin{tabular}{|c|c|}
        \hline
                & Accuracy \\\hline
        \makecell{ConvNet from scratch \\ \citep{zeiler2014visualizing}}  & $46 \pm 1.7\%$ \\\hline         
         Layer 1&  $45.5 \pm 0.9$\\\hline
         Layer 2&  $59.9     \pm 0.9 $\\\hline
         Layer 3&  $70.0      \pm  0.9  $\\\hline
         Layer 4& $75.0 \pm 1.0$\\\hline
         Layer 8&  $82.6 \pm 0.9$ \\\hline
    \end{tabular}
    \caption{Accuracy obtained by a linear model using the features of the $k=1$ network at a given layer on the Caltech-101 dataset. We also give the reference accuracy without transfer.}
    \label{tab:caltech}
\end{table}

\end{document}